\newtheorem{lemma}{Lemma}               
\newtheorem{proposition}{Proposition}   
\title{Optimized Learned Count-Min Sketch}
\author{%
   Kyosuke Nishishita \quad
  Atsuki Sato \quad
  Yusuke Matsui \\
  The University of Tokyo \\
  \texttt{\{nishishita, a\_sato, matsui\}@hal.t.u-tokyo.ac.jp}
}
\begin{document}

\maketitle

\begin{abstract}
Count-Min Sketch (CMS) is a memory-efficient data structure for estimating the frequency of elements in a multiset. 
Learned Count-Min Sketch (LCMS) enhances CMS with a machine learning model to reduce estimation error under the same memory usage, but suffers from slow construction due to empirical parameter tuning and lacks theoretical guarantees on intolerable error probability. 
We propose Optimized Learned Count-Min Sketch (OptLCMS), which partitions the input domain and assigns each partition to its own CMS instance, with CMS parameters $(\epsilon, \delta)$ analytically derived for fixed thresholds, and thresholds optimized via dynamic programming with approximate feasibility checks. This reduces the need for empirical validation, enabling faster construction while providing theoretical guarantees under these assumptions.
OptLCMS also allows explicit control of the allowable error threshold, improving flexibility in practice. 
Experiments show that OptLCMS builds faster, achieves lower intolerable error probability, and matches the estimation accuracy of LCMS.
\end{abstract}

\section{Introduction}\label{introduction}
Count-Min Sketch (CMS) is a probabilistic data structure that estimates the frequency of each element in a multiset in a memory-efficient way~\cite{cms_applicatin}. 
Frequency estimation is fundamental, and sketch data structures such as CMS are widely used~\cite{estan2003new,schechter2010popularity,goyal2012sketch,talukdar2014scaling,dzogang2015scalable,NetCache,aghazadeh2018mission}.
While CMS is a useful data structure, there is a trade-off between estimation error and memory usage:
reducing memory consumption increases the estimation error, while achieving lower error requires more memory.

Hsu et al.~\cite{hsu2019learning} proposed the Learned Count-Min Sketch (LCMS), which trains a machine learning model to generate approximate frequency scores, using a threshold to decide whether an element is stored in a dictionary or counted by CMS.  
Under a fixed memory budget, LCMS tunes CMS parameters and the threshold via validation data to minimize error, achieving more memory-efficient estimation than CMS. 
However, the construction of LCMS is slow because it requires validation.
Also, LCMS lacks a theoretical guarantee on the probability of an ``intolerable error,'' i.e., that the estimation error exceeds a user-specified threshold. 
Classical CMS provides such bounds, ensuring reliability, but LCMS does not.

We propose the Optimized Learned Count-Min Sketch (OptLCMS) to address these issues. 
Our method targets the Partitioned Learned Count-Min Sketch (PL-CMS)~\cite{nguyenpartitioned}, a recently introduced data structure for the heavy hitter problem, and adapts it for frequency estimation. 
We optimize its parameters by partitioning the score space and formulating an optimization problem to minimize the probability of intolerable error. For fixed thresholds, $\delta$ can be derived analytically via the Karush-Kuhn-Tucker (KKT) conditions, while thresholds themselves are optimized through dynamic programming with approximate feasibility checks. CMS dimensions are obtained by relaxing integer constraints. This approach enables near-analytical construction, reduces build time compared to LCMS while maintaining memory efficiency.
It also offers flexible error control with stronger guarantees.

\section{Related Work}
Learned data structures were introduced by Kraska et al.~\cite{kraska2018case} with ``Learned Indexes,'' which reinterpret classical indexes such as B-Tree~\cite{bayer1970organization,comer1979ubiquitous} as models mapping keys to record positions. Subsequent work extended the idea from one-dimensional~\cite{ALEX,PGM,kipf2020radixspline} to multi-dimensional ones~\cite{nathan2020learning,ding2020tsunami,hidaka}. Kraska et al.~\cite{kraska2018case} also observed that probabilistic data structures, e.g., Bloom Filters~\cite{bloom1970space}, can be learned, inspiring Learned Bloom Filters~\cite{NEURIPS2020_86b94dae,vaidya2020partitioned}. The Partitioned Learned Bloom Filter (PLBF)~\cite{vaidya2020partitioned,sato2023fast} partitions the score space to optimize the false positive rate. Bloom Filters and Count-Min Sketch (CMS) share a space-efficient probabilistic design: the former tests set membership, the latter estimates frequencies. We extend the partitioning and optimization ideas from PLBF to CMS.

In the context of Learned Count-Min Sketch (LCMS), Hsu et al.~\cite{hsu2019learning} proposed combining CMS with a machine learning model, followed by Zhang et al.~\cite{ZHANG2020365}, which uses a threshold to route elements with low scores to CMS and high scores to the model’s estimate. This approach depends heavily on model accuracy and cannot asymptotically eliminate error even with large memory. Nguyen et al.~\cite{nguyenpartitioned} introduced a partitioned LCMS for the heavy hitter problem, differing from frequency estimation. Their threshold is fixed a priori, whereas our method computes it via a dynamic programming approach to approximate the optimum.

\section{Preliminaries}
\paragraph{Multiset.}
A multiset is a set that allows multiple occurrences of the same element. 
For example, $\mathcal{A}=\{a,a,b\}$ and $\mathcal{B}=\{a,b,b\}$ are different, with $a$ occurring twice in $\mathcal{A}$ and once in $\mathcal{B}$. 
All sets in this paper are multisets.

\paragraph{Count-Min Sketch (CMS).}
The parameters of CMS are $\epsilon$ and $\delta$, where $0 < \epsilon$ and $0 < \delta \le 1$. 
CMS estimates the frequency $f(x)$ of an element $x$ in a multiset using a table of width 
$w = \lceil e/\epsilon \rceil$ and depth $d = \lceil \ln(1/\delta) \rceil$ with $d$ hash functions. 
 For each element in a multiset, $d$ hash functions specify $d$ cells.
 The value in each of these cells is incremented by one.
 The estimate $\hat{f}(x)$ is then given by the minimum of the $d$ cell values.
CMS guarantees $\Pr[\hat{f}(x) - f(x) > \epsilon N] < \delta$ where $N$ is the size of the multiset, 
and requires $w \cdot d \cdot b$ bytes of memory, where $b$ is bytes per cell.
Details of CMS are given in Appendix~\ref{appendix:cms}.

\paragraph{Unique Bucket (UB).}
UB is an exact frequency counter, essentially a dictionary with keys given by the elements and values given by their exact counts, which requires significantly more memory than CMS.

\section{Proposed Method}

\begin{figure*}[t]
    \centering
    \includegraphics[width=\textwidth]{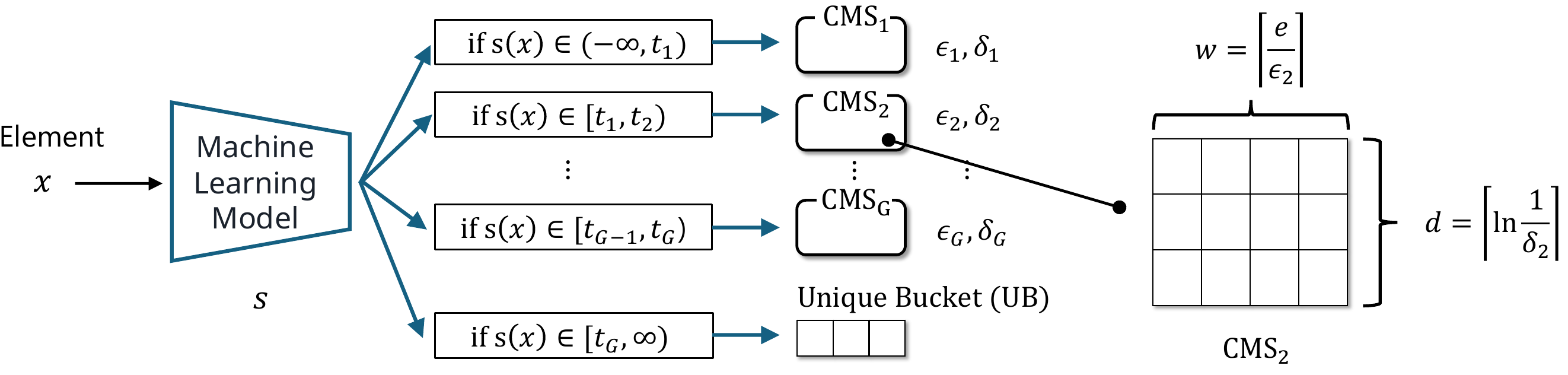}
    \caption{Partitioned Learned Count-Min Sketch}
    \label{ours}
\end{figure*}
\label{method}
We propose the Optimized Learned Count-Min Sketch (OptLCMS), which extends the Partitioned Learned Count-Min Sketch (PL-CMS)~\cite{nguyenpartitioned}. 
While PL-CMS was developed for the heavy hitter problem, we adapt it to frequency estimation and formulate a optimization problem to determine its parameters. 
By partitioning the score space and analytically solving this problem, we minimize the probability that the estimation error exceeds a user-specified threshold under a fixed memory budget.

\paragraph{Structure.}
PL-CMS consists of a pre-trained machine learning model, $G$ CMS tables, and a unique bucket (UB) for high-frequency elements (\Cref{ours}). 
Elements are routed based on the model score: high-score items go to UB for exact counting, others to the CMS of the corresponding score group. 
This design preserves the CMS property $\hat{f}(x) \ge f(x)$.
We denote by $\bm{t} = [t_1,\dots,t_G]^\top$ the vector of partition thresholds with $t_1 < t_2 < \dots < t_G$, and let $\bm{\epsilon} = [\epsilon_1,\dots,\epsilon_G]^\top$ and $\bm{\delta} = [\delta_1,\dots,\delta_G]^\top$ be the CMS parameter vectors for the $G$ groups.  

\paragraph{Optimization formulation.}
Our task is to efficiently determine optimal thresholds $\bm{t}$ and CMS parameters $\bm{\epsilon}, \bm{\delta}$, given the pre-trained ML model and the input multiset, under a memory budget.  
To this end, we formulate the following optimization problem

The upper bound on the intolerable error probability is $\sum_{g=1}^{G} \delta_g q_g$
which we minimize subject to the memory budget and CMS constraints:
\begin{align}
    \min_{\bm{t},\bm{\epsilon},\bm{\delta}} & \quad  \sum_{g=1}^{G} \delta_g q_g \label{eq:object}\\
    \text{subject to }
    & \quad b\sum_{g=1}^{G} \left\lceil \frac{e}{\epsilon_g}\right\rceil \left\lceil\ln{\frac{1}{\delta_g}}\right\rceil  + c n = M \label{constrain:memory}\\ 
    & \quad \delta_g \leq 1, \quad g = 1, \dots, G\label{constrain:delta} \\
    & \quad \epsilon_g = \frac{\epsilon N}{N_g}, \quad g = 1, \dots, G \label{constrain:epsilon}
\end{align}
Here, $n$ denotes the number of unique elements classified into the UB, and $c$ is the memory usage per element. 
Thus, $cn$ corresponds to the total memory consumption (in bytes) of the UB. 
\Cref{constrain:memory} specifies the memory budget, where $M$ is the total available memory. 
\Cref{constrain:delta} enforces the CMS condition $\delta_g \leq 1$, 
and \Cref{constrain:epsilon} ensures that each CMS complies with the global allowable error.
The ceiling functions enforce integer table dimensions in practice; however, for theoretical analysis, we relax them to continuous variables, introducing only a negligible non-convexity in implementation.
The derivation of this formulation is in Appendix~\ref{appendix:optimization}.

\paragraph{Analytical solution for $\bm{\delta}$.}
With fixed thresholds $\bm{t}$, the problem is convex in $\bm{\delta}$ and can be solved analytically via the Karush-Kuhn-Tucker (KKT) conditions.  
The detailed derivation is provided in Appendix~\ref{appendix:optimization}.  
The closed-form solution for each $g \in \{1, \dots, G\}$ is:
\begin{equation}\label{eq:delta}
\delta_g =
\min\left\{1,\;
\frac{1}{q_g\epsilon_g}
\exp\left[
-\frac{\frac{M-cn}{be} - I}
{\sum_{g\in\mathcal{D}} \frac{1}{\epsilon_g}}
\right]
\right\},
\end{equation}
where
\begin{equation}
    I = \sum_{g\in\mathcal{D}} \frac{1}{\epsilon_g} \ln\left(q_g\epsilon_g\right), \quad
    \mathcal{D} = \left\{ g \mid \delta_g < 1 \right\}.
\end{equation}
This gives $\bm{\epsilon}$ and $\bm{\delta}$ for the $G$ CMSs when $\bm{t}$ is fixed.

\paragraph{Threshold optimization.}
Substituting $(\bm{\epsilon},\bm{\delta})$ into the objective yields an expression involving the Kullback–Leibler (KL) divergence between the data and query distributions of each group.  
The optimal thresholds are found by maximizing this divergence via dynamic programming (DP), ensuring $\delta_g < 1$ for all groups.  
The full derivation, DP recurrence, and proof of correctness are in Appendix~\ref{appendix:optimization}.

\section{Experiments}\label{sec:experiments}
We evaluate OptLCMS on the AOL query log (21M queries, 3.8M unique terms, Zipfian distribution) under two query patterns: \textit{uniform} (each term once) and \textit{frequency-weighted} (proportional to term frequency).  
All methods use the learned model of~\cite{hsu2019learning}, with amortized model size $0.0152$ MB.
This constant cost is excluded from \Cref{constrain:memory} but included in the reported memory usage for \Cref{fig:memory-prob} and \ref{fig:memory-error}.
Implementation and dataset details are provided in Appendix~\ref{appendix:experimental setup}.

\paragraph{Baselines.}
We compare against: (i) CMS~\cite{cms_applicatin}, tuned to minimize error for a fixed memory budget; (ii) LCMS~\cite{hsu2019learning}, using the same model; and (iii) OptLCMS, with $G=10$.
For each $M$, $\epsilon$ is set to the smallest value feasible under CMS, namely $\epsilon = \frac{e}{M}$.

\begin{figure*}[t]
    \centering
    \includegraphics[width=0.86\textwidth]{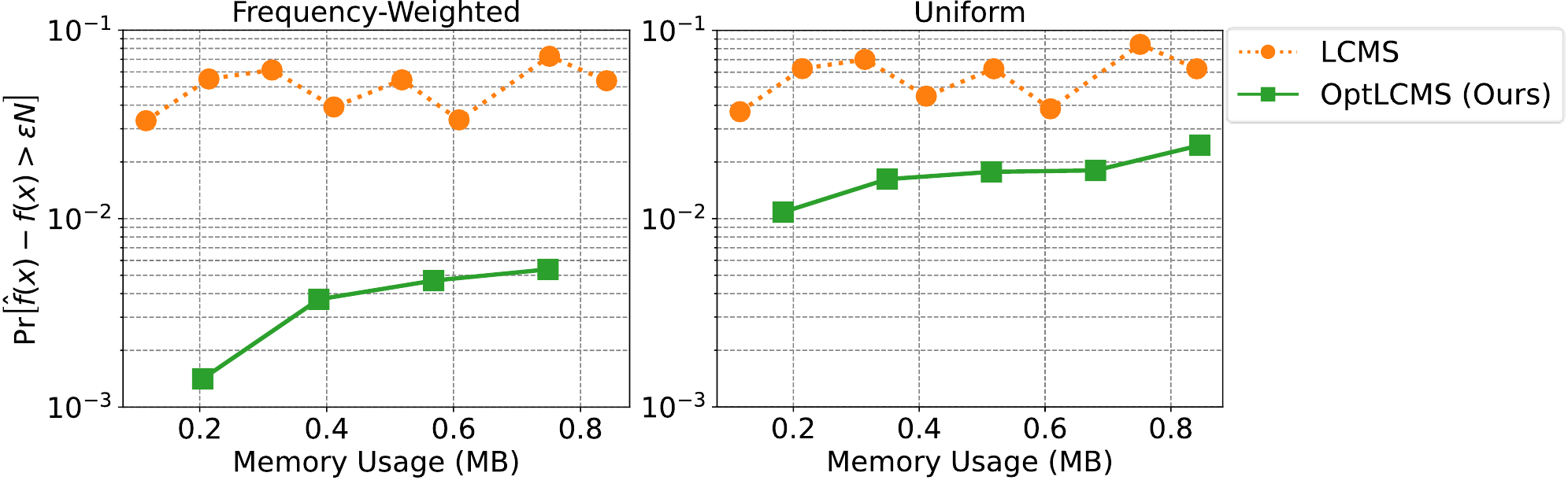}
    \caption{Memory Usage vs. intolerable error probability, defined as
    $\mathrm{Pr}\left[\hat{f}(x)-f(x)>\epsilon N\right]$. A lower position on the vertical axis indicates better performance.}
    \label{fig:memory-prob}
    
    \vspace{1em}
    
    \includegraphics[width=0.86\textwidth]{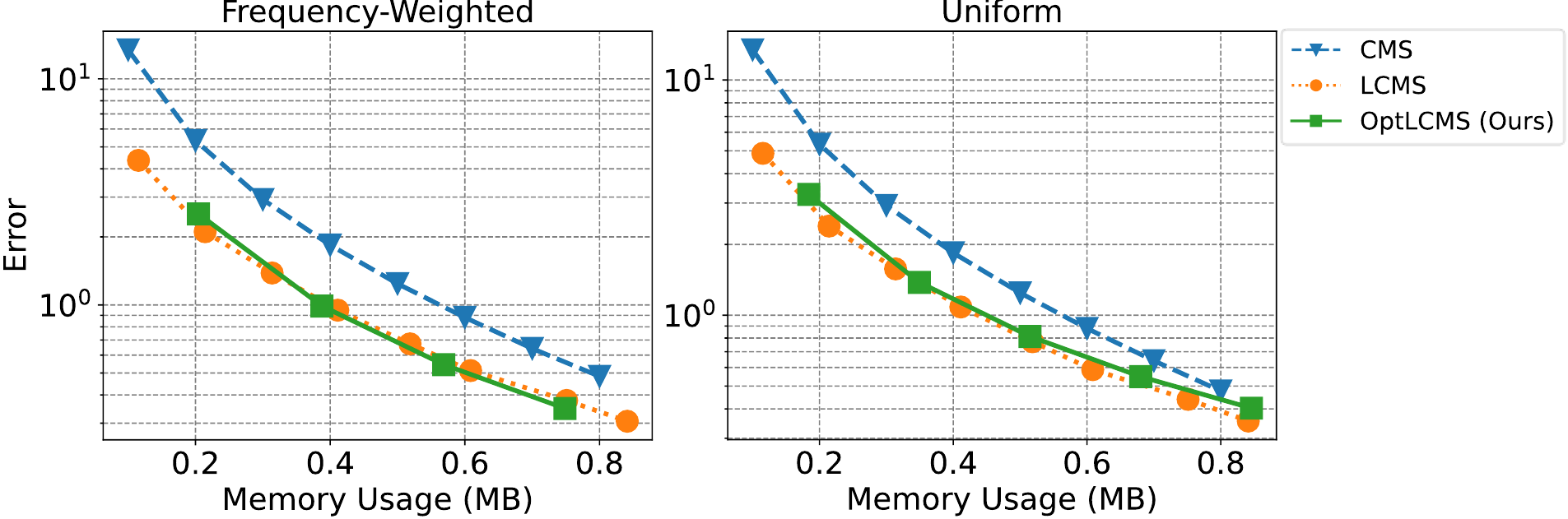}
    \caption{Memory Usage vs. Error. Curves closer to the bottom-left corner perform better.}
    \label{fig:memory-error}

    \vspace{1em}

    \begin{tabular}{@{}lll@{}}
    \toprule
        Data Structure & Construction [s] & distribution\\ \midrule
        LCMS & 10.712 & frequency-weighted\\
        LCMS & 10.250 & uniform \\
        OptLCMS (Ours) & 0.003 \textcolor{ForestGreen}{\textbf{(-10.709)}} &frequency-weighted\\
        OptLCMS (Ours) & 4.873  \textcolor{ForestGreen}{\textbf{(-5.377)}} &uniform \\ \bottomrule
    \end{tabular}
    \captionof{table}{Comparison of construction time}
    \label{tab:time}
\end{figure*}

\paragraph{Results.}
\Cref{fig:memory-prob} shows that our OptLCMS consistently yields the lowest intolerable error probability while matching LCMS in average error, under both query patterns.
OptLCMS achieves up to about 20× smaller intolerable error probability compared to LCMS.
\Cref{fig:memory-error} further confirms that minimizing the upper bound on intolerable error does not sacrifice average error performance.

\Cref{tab:time} reports construction times.  
Unlike LCMS, which requires exhaustive threshold and parameter search with actual measurements, OptLCMS computes optimal parameters analytically from the validation data distribution, yielding orders-of-magnitude faster construction for frequency-weighted queries and more than twice as fast for uniform queries.

\section{Conclusion and Future Work}\label{conclusion and future work}
We presented OptLCMS, a theoretically grounded variant of LCMS~\cite{hsu2019learning} that analytically determines CMS parameters by partitioning the score space, thereby minimizing the upper bound on the probability of intolerable error under a fixed memory budget.  
This approach removes the need for empirical validation, enabling much faster construction while maintaining comparable estimation accuracy to LCMS and providing stronger guarantees and flexibility.  
Due to time and scope constraints, we did not evaluate on the CAIDA dataset~\cite{hsu2019learning} or compare with confidence-aware LCMS~\cite{aamand2023improved}; extending our evaluation to these settings is left for future work.

\newpage

\bibliographystyle{unsrtnat}  
\bibliography{reference}

\newpage


\appendix\label{appendix}

\section{Count-Min Sketch (CMS) Details}\label{appendix:cms}
Count-Min Sketch (CMS) is a probabilistic data structure for estimating the frequency $f(x)$ of an element $x$ in a multiset $\mathcal{U}$. 
CMS maintains a two-dimensional array $C$ of width
\begin{equation}
w \coloneqq \left\lceil \frac{e}{\epsilon} \right\rceil
\end{equation}
and depth
\begin{equation}
d \coloneqq \left\lceil \ln{\frac{1}{\delta}} \right\rceil,
\end{equation}
where $\epsilon > 0$ and $0 < \delta \le 1$ are user-specified parameters controlling accuracy and confidence, respectively.  

\subsection{Hash Functions}
We prepare $d$ pairwise-independent hash functions
\begin{equation}
h_i: \mathcal{S} \to \{0, 1, \dots, w-1\}, \quad i = 0, 1, \dots, d-1,
\end{equation}
where $\mathcal{S}$ denotes the set of all unique elements that can appear in $\mathcal{U}$.

\subsection{Update Procedure}
For each occurrence of an element $x$, the count in each row is incremented as:
\begin{equation}
C[i,\, h_i(x)] \ \leftarrow \ C[i,\, h_i(x)] + 1, \quad \forall i \in \{0, \dots, d-1\}.
\end{equation}

\subsection{Query Procedure}
The estimated frequency $\hat{f}(x)$ is given by:
\begin{equation}
\hat{f}(x) \coloneqq \min_{i \in \{0, \dots, d-1\}} C\big[i,\, h_i(x)\big],
\end{equation}
which mitigates the effect of overcounting caused by hash collisions.

\subsection{Error Guarantee}
By construction, CMS guarantees $\hat{f}(x) \ge f(x)$ and satisfies the probabilistic bound:
\begin{equation}\label{eq:cms-error-appendix}
\Pr\left[ \hat{f}(x) - f(x) > \epsilon N \right] < \delta,
\end{equation}
where $N \coloneqq |\mathcal{U}|$ is the total number of elements in the multiset.

\subsection{Memory Usage}
The total memory consumption is:
\begin{equation}
\mathrm{Memory} = \left\lceil \frac{e}{\epsilon} \right\rceil \cdot \left\lceil \ln{\frac{1}{\delta}} \right\rceil \cdot b \quad [\mathrm{bytes}],
\end{equation}
where $b$ is the memory usage per counter (in bytes).

\section{Derivation of the Optimization Problem}\label{appendix:optimization}
\subsection{Objective: Probability Upper Bound to Minimize}
Our goal is to minimize the probability that the estimation error of the LCMS exceeds the allowable error $\epsilon N$.  
We refer to this quantity as the \textit{intolerable error probability}.
To analyze this, we partition the score space from the learned model into $G+1$ groups, where $G$ is a hyperparameter, and associate a Count-Min Sketch (CMS) with each group.

Let the threshold vector be $\bm{t} = [t_1,\dots,t_G]^{\top}$, where $t_1 < t_2 < \dots < t_G$.  
For the $g$-th group, we define the CMS parameters $\epsilon_g$ and $\delta_g$, and denote the sets handled by each CMS as $\mathcal{U}_g$ with size $N_g = |\mathcal{U}_g|$.  
We also define the query distribution $q_g = \mathrm{Pr}[x \in \mathcal{U}_g]$.  
The parameter condition
\begin{equation}
    \epsilon_g N_g = \epsilon N
\end{equation}
is imposed so that the allowable error for each CMS matches that of the entire system.  

We consider the probability that the estimation error exceeds $\epsilon N$:
\begin{align}
\mathrm{Pr}\left[\hat{f}(x) - f(x) > \epsilon N \right]
&= \sum_{g=1}^{G} \mathrm{Pr}\left[\hat{f}(x) - f(x) > \epsilon N \mid x \in \mathcal{U}_g \right] \mathrm{Pr}[x \in \mathcal{U}_g] \\
&< \sum_{g=1}^{G} \delta_g q_g ,
\end{align}
where the last inequality follows from applying the CMS error bound to each group.  
Thus, minimizing the intolerable error probability reduces to minimizing $\sum_{g=1}^G \delta_g q_g$ over the partitioning thresholds $\bm{t}$ and CMS parameters $(\epsilon_g, \delta_g)$, subject to the overall memory constraint.

\subsection{Problem Formulation}
The optimization problem is:
\begin{align}
    \min_{\bm{t},\bm{\epsilon},\bm{\delta}} & \quad  \sum_{g=1}^{G} \delta_g q_g \label{eq:object}\\
    \text{s.t. }
    & \quad b\sum_{g=1}^{G} \left\lceil \frac{e}{\epsilon_g}\right\rceil \left\lceil\ln{\frac{1}{\delta_g}}\right\rceil  + c n = M \\ 
    & \quad \delta_g \leq 1, \quad g = 1, \dots, G \\
    & \quad \epsilon_g = \frac{\epsilon N}{N_g}, \quad g = 1, \dots, G 
\end{align}
where $n$ is the number of unique elements in the UB, $c$ the per-element memory, and $M$ the total budget.

\subsection{Solution of the optimization problem}
\paragraph{Lagrangian Function}
We now apply the Karush–Kuhn–Tucker (KKT) conditions to derive the optimal solution.
This involves setting the derivatives of the Lagrangian with respect to each variable to zero and applying the complementary slackness condition.
The Lagrangian function for the problem is given by:
\begin{equation}
\mathcal{L} (\bm{\delta},\bm{\mu},\lambda) 
= \sum_{g=1}^{G} \delta_g q_g 
    + \sum_{g=1}^{G}{\mu_g\left(\delta_g - 1\right)}
      + \lambda \left(  b\sum_{g=1}^{G} \frac{e}{\epsilon_g} \ln{\frac{1}{\delta_g}} + c n - M \right).
\end{equation}

\paragraph{First-Order Conditions}
Taking the partial derivatives of the Lagrangian with respect to $\delta_g$, $\mu_g$, and $\lambda$, we obtain the following conditions.

The derivative with respect to $\delta_g$ is:
\begin{equation}
    \frac{\partial \mathcal{L}}{\partial \delta_g} = q_g + \mu_g - \lambda b \frac{e}{\epsilon_g} \frac{1}{\delta_g}.
\end{equation}
Setting this equal to zero for the optimal condition:
\begin{equation}\label{eq:dl-dd}
    q_g + \mu_g - \lambda b \frac{e}{\epsilon_g} \frac{1}{\delta_g} = 0.
\end{equation}

The derivative with respect to $\lambda$ is:
\begin{equation}
    \frac{\partial \mathcal{L}}{\partial \lambda} = b \sum_{g=1}^{G} \frac{e}{\epsilon_g} \ln \frac{1}{\delta_g} + c n - M.
\end{equation}
Setting this equal to zero for the optimal condition:
\begin{equation}\label{appendix:memory-const}
    b \sum_{g=1}^{G} \frac{e}{\epsilon_g} \ln \frac{1}{\delta_g} + c n - M = 0.
\end{equation}

The derivative with respect to $\mu_g$ is:
\begin{equation}
    \frac{\partial \mathcal{L}}{\partial \mu_g} = \delta_g - 1.
\end{equation}

\paragraph{KKT Conditions}
From the complementary slackness condition, we obtain the following:
\begin{equation}
    \mu_g (\delta_g - 1) = 0.
\end{equation}
This implies that either $\mu_g = 0$ or $\delta_g = 1$, meaning that the constraint is either inactive or active.
From the non-negativity of the Lagrange multiplier, we have $\mu_g \geq 0$.
From the non-negativity of the constraint, it follows that $\delta_g \leq 1$.

Consider the case $\mu_g = 0$.
When $\mu_g = 0$, from the complementary slackness condition, we have $\delta_g < 1$. 
The optimization condition \eqref{eq:dl-dd} becomes:
\begin{equation}
    q_g - \lambda b \frac{e}{\epsilon_g} \frac{1}{\delta_g} = 0,
\end{equation}
which can be solved for $\delta_g$ as:
\begin{equation}
    \delta_g = \frac{\lambda b e}{q_g \epsilon_g}.
\end{equation}
We then check whether this $\delta_g$ satisfies the condition $\delta_g \leq 1$. If $\delta_g \geq 1$, this case is not valid.

Consider the case $\delta_g = 1$.
When $\delta_g = 1$, the complementary slackness condition implies that $\mu_g \geq 0$. The optimization condition becomes:
\begin{equation}
    q_g + \mu_g - \lambda b \frac{e}{\epsilon_g} = 0,
\end{equation}
which gives:
\begin{equation}
    \mu_g = \lambda b \frac{e}{\epsilon_g} - q_g.
\end{equation}

\paragraph{Derivation of $\delta_g$}
Let $\mathcal{D}$ be the set of $g \in \left\{1,\dots,G\right\}$ such that $\delta_g < 1$, i.e.,
\begin{equation}
    \mathcal{D} = \left\{g \mid \delta_g<1\right\}.
\end{equation}
In this case, for $g \in \mathcal{D}$, $\delta_g = \frac{\lambda b e}{q_g \epsilon_g}$, and for $g \notin \mathcal{D}$, $\delta_g = 1$.

Substituting $\delta_g$ into \Cref{appendix:memory-const}, we get:
\begin{align}
    b \sum_{g\in\mathcal{D}} \frac{e}{\epsilon_g} \ln {\frac{1}{\frac{\lambda b e}{q_g \epsilon_g}}} + c n - M &= 0 \\
    b \sum_{g\in\mathcal{D}} \frac{e}{\epsilon_g} \ln {\frac{q_g\epsilon_g}{\lambda b e}} + c n - M &= 0 \\
    \sum_{g\in\mathcal{D}}{\frac{1}{\epsilon_g}\ln{\left(q_g\epsilon_g\right)}} - \sum_{g\in\mathcal{D}}\frac{1}{\epsilon_g}\ln{\left(\lambda b e\right)}&= \frac{M-cn}{be} \\
    -\sum_{g\in\mathcal{D}}\frac{1}{\epsilon_g}\ln{\left(\lambda b e\right)} &= \frac{M-cn}{be} - \sum_{g\in\mathcal{D}}{\frac{1}{\epsilon_g}\ln{\left(q_g\epsilon_g\right)}} \\
    \lambda b e &= \exp{\left[-\frac{\frac{M-cn}{be} - \sum_{g\in\mathcal{D}}{\frac{1}{\epsilon_g}\ln{\left(q_g\epsilon_g\right)}}}{\sum_{g\in\mathcal{D}}{\frac{1}{\epsilon_g}}}\right]}
\end{align}
Therefore, for $g \in \mathcal{D}$, $\delta_g$ is given by the following:
\begin{equation}
    \delta_g = \frac{1}{q_g\epsilon_g}\exp{\left[-\frac{\frac{M-cn}{be} -\sum_{g\in\mathcal{D}}{\frac{1}{\epsilon_g}\ln{\left(q_g\epsilon_g\right)}}}{\sum_{g\in\mathcal{D}}{\frac{1}{\epsilon_g}}}\right]}.
\end{equation}

From this, $\delta_g$ is expressed as follows:
\begin{equation}
\delta_g = \\
\min\left\{1,\quad
\frac{1}{q_g\epsilon_g}\exp{\left[-\frac{\frac{M-cn}{be} -\sum_{g\in\mathcal{D}}{\frac{1}{\epsilon_g}\ln{\left(q_g\epsilon_g\right)}}}{\sum_{g\in\mathcal{D}}{\frac{1}{\epsilon_g}}}\right]}
\right\}.
\end{equation}

\subsection{Threshold Determination Method}
Given a threshold vector $\bm{t}=\left[t_1,\dots,t_G \right]^{\top}$, we can analytically derive the CMS parameters as the solution to the optimization problem in \Cref{eq:object}, as shown in \Cref{eq:delta}.
Thus, the next task is to find the optimal threshold vector $\bm{t}=\left[t_1,\dots,t_G \right]^{\top}$. 

To determine the threshold, we substitute $\bm{\epsilon}$ and $\bm{\delta}$ into the objective function, which we aim to minimize. 
We rewrite the objective function \eqref{eq:object} as follows:
\begin{equation}
    \sum_{g \in \mathcal{\overline{D}} }{q_g} 
    + \sum_{g \in \mathcal{D}}{\frac{1}{\epsilon_g}
    \exp{\left[-\frac{\epsilon N (M-cn)}{be\sum_{g \in \mathcal{D}}N_g}\right]}
    \exp{\left[\frac{I}{\sum_{g \in \mathcal{D}}{\frac{1}{\epsilon_g}}}\right]}},
\end{equation}
where $\overline{D} = \left\{g\mid\delta_g = 1\right\}$.

Considering that the task to solve is element frequency estimation, we impose the condition $\delta_g \neq 1$.
This is because $\delta_g = 1$ implies that the CMS table for group $g$ does not exist. 
While $\delta_g = 1$ may be a valid result for minimizing the probability that the estimation error exceeds the allowable error, a table is necessary to count the element frequencies. 
Therefore, we consider $\delta_g < 1$ going forward.

In this case, the objective function to be minimized is as follows:
\begin{equation}\label{func:object}
    \sum_{g =1}^{G}{\frac{1}{\epsilon_g}
    \exp{\left[-\frac{\epsilon N (M-cn)}{be\sum_{g =1}^{G}N_g}\right]}
    \exp{\left[\frac{I}{\sum_{g = 1}^{G}{\frac{1}{\epsilon_g}}}\right]}}.
\end{equation}
Let the size of the multiset consisting of the elements classified into the UB be denoted as $N_{\mathrm{UB}}$. 
Note that $n$ refers to the size of the set of elements classified into the UB, while $N_{\mathrm{UB}}$ represents the size of the multiset, e.g. $n=2$ and $N_{\mathrm{UB}} = 3$ for $\left\{x_1, x_2,x_2 \right\}$.
Additionaly, for each $g \in \left\{1,\dots,G\right\}$ we define new variables as follows:
\begin{align}
    u_g = \frac{N_g}{N-N_{\mathrm{UB}}}, \\
    v_g = \frac{q_g}{1 - q_{\mathrm{UB}}}.
\end{align}
$u_g$ represents the size of the multiset handled by the CMS for group $g$ relative to the size of the multiset handled by the CMS, and it satisfies $\sum_{g=1}^{G}{v_g} = 1$.
Additionally, $v_g$ represents the proportion of queries handled by the CMS for group $g$ out of all queries processed by the CMS, and it satisfies $\sum_{g=1}^{G}{v_g} = 1$. Substituting these into $\epsilon_g$ and $q_g$, we obtain the following:
\begin{align}
    \epsilon_g = \frac{\epsilon N}{(N-N_{\mathrm{UB}})u_g}, \\
    q_g = v_g (1-q_{\mathrm{UB}}).
\end{align}

Substituting these into \Cref{func:object} and simplifying, the objective function to minimize becomes:
\begin{align}\label{func:minimum}
    \left(1-q_{\mathrm{UB}}\right)\exp{\left[-\frac{\epsilon N(M-cn)}{be(N-N_{\mathrm{UB}})}\right]}
    \exp{\left[-\sum_{g=1}^{G}u_g\ln{\frac{u_g}{v_g}}\right]}.
\end{align}
By determining the threshold $t_G$ that defines the boundary between the UB and the CMS, we can automatically determine the variables $q_{\mathrm{UB}},n$, and $N_{\mathrm{UB}}$.
Therefore, what \Cref{func:minimum} suggests is that by setting $t_G$,  the partitioning of the score space that is smaller than $t_G$ should be done in such a way that $\sum_{g=1}^{G}u_g\ln{\frac{u_g}{v_g}}$ is maximized.
The term $\sum_{g=1}^{G}u_g\ln{\frac{u_g}{v_g}}$ represents the Kullback-Leibler (KL) divergence.

\subsection{Derivation of the Threshold}
This maximization problem of the KL divergence can be efficiently solved using dynamic programming (DP) as follows.
\begin{dmath}
\label{DP}
    DP (s,p) = \max_{y} \left\{DP(y,p-1) + u(y,s) \ln{\frac{u(y,s)}{v(y,s)}}\right\}.
\end{dmath}
Here, $DP(s,p)$ represents the maximum divergence when the score space with scores less than $s$ is divided into no more than $p$ parts.
Additionally, $u(y,s)$ represents the proportion of the size of the multiset consisting of elements with scores in the interval $[y,s)$, relative to the size of the multiset processed by the CMS.
Meanwhile, $v(y,s)$ represents the proportion of queries processed by the CMS that correspond to elements with scores in the interval $[y,s)$.

The following explains the $DP(s,p)$ process. First, prepare a list to store the sorted threshold candidates in advance.
\begin{enumerate} 
    \item Assign $DP(s,p-1)$ to $DP(s,p)$. 
    \item Iterate through the threshold candidates that are less than $s$. Let this be denoted as $y$. 
    \item Consider dividing the score space using the threshold $y$, and compute $DP(y,p-1) + u(y,s) \ln{\frac{u(y,s)}{v(y,s)}}$. 
    \item If $DP(y,p) < DP(y,p-1) + u(y,s) \ln{\frac{u(y,s)}{v(y,s)}}$, verify whether the CMS handling the score space $[y,s)$ (which is denoted as group $g$) satisfies the condition for $\delta_g$ \label{algo:check}. 
    \item If the condition is satisfied, update $DP(y,p)$ as $DP(y,p) = DP(y,p-1) + u(y,s) \ln{\frac{u(y,s)}{v(y,s)}}$. \end{enumerate}
This DP solution would return an analytical optimal solution if the condition $\delta_g < 1$ were not present, as the process in step \Cref{algo:check} would not be required. 
However, since CMS requires the condition $\delta_g < 1$, the process in step \Cref{algo:check} is necessary.

Next, we describe how to check whether each $\delta_g$ satisfies the condition $\delta_g < 1$ when adopting the new threshold $y$.
Now, we are considering an efficient solution using DP, so the division of the score space $[s,t_G)$ is unknown. Therefore, at this stage, it is impossible to determine $\delta_g$ precisely.
Therefore, we approximate $\delta_g$ as $\hat{\delta}_g$ as follows:
\begin{dmath}
    \hat{\delta}_g = \frac{u_g}{v_g}\exp{\left[-\frac{\epsilon N(M-cn)}{be(N-N_{\mathrm{UB}})}\right]}\exp\left[-\left(DP(s,p)+ u(s,t_G)\ln{\frac{u(s,t_G)}{v(s,t_G)}}\right)\right].
\end{dmath}
If $\hat{\delta}_g < 1 \implies \delta_g < 1$ holds, then regardless of how the unscanned score space is divided, we only need to check that $\hat{\delta}_g < 1$.
To demonstrate that $\hat{\delta}_g < 1 \implies \delta_g < 1$, we derive the following lemma.
\begin{lemma}
\label{lem:loginequality}
For any positive numbers $j,j_1,j_2,k,k_1,k_2 > 0$, the following inequality holds:
\[
j \ln\left(\frac{j}{k}\right) \leq j_1 \ln\left(\frac{j_1}{k_1}\right) + j_2 \ln\left(\frac{j_2}{k_2}\right),
\]
where $j = j_1+j_2$ and $k = k_1 + k_2$
\end{lemma}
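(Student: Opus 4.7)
The plan is to recognize this as (a normalized form of) the classical two-term log-sum inequality and derive it from the convexity of $f(x)=x\ln x$ on $(0,\infty)$ via Jensen's inequality. Since all of $j_1,j_2,k_1,k_2$ are strictly positive, every logarithm appearing below has a positive argument, so the manipulations are legitimate.

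First, I would rescale the $k_i$ into probability weights. Define $\lambda_i = k_i/k$ for $i=1,2$; because $k=k_1+k_2$ and the $k_i$ are positive, $(\lambda_1,\lambda_2)$ is a probability vector. Choose sample points $x_i = j_i/k_i > 0$, and compute
$$\sum_{i=1}^{2}\lambda_i x_i \;=\; \frac{k_1}{k}\cdot\frac{j_1}{k_1}+\frac{k_2}{k}\cdot\frac{j_2}{k_2} \;=\; \frac{j_1+j_2}{k} \;=\; \frac{j}{k}.$$
Second, since $f(x)=x\ln x$ satisfies $f''(x)=1/x>0$ on $(0,\infty)$, it is strictly convex there, and Jensen's inequality gives
$$\frac{j}{k}\ln\frac{j}{k} \;=\; f\!\left(\sum_{i=1}^{2}\lambda_i x_i\right) \;\le\; \sum_{i=1}^{2}\lambda_i f(x_i) \;=\; \sum_{i=1}^{2}\frac{k_i}{k}\cdot\frac{j_i}{k_i}\ln\frac{j_i}{k_i} \;=\; \frac{1}{k}\sum_{i=1}^{2} j_i \ln\frac{j_i}{k_i}.$$
Finally, multiplying both sides by $k>0$ yields the claimed bound $j\ln(j/k)\le j_1\ln(j_1/k_1)+j_2\ln(j_2/k_2)$.

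There is essentially no hard step: once one identifies the correct weights $\lambda_i=k_i/k$ and sample points $x_i=j_i/k_i$, the inequality is an immediate application of Jensen. The only thing to mention carefully is positivity (so logs and divisions make sense) and the convexity of $x\ln x$. If a self-contained argument avoiding Jensen were preferred, one could equivalently verify the two-point case of the log-sum inequality by writing the difference RHS$-$LHS as the non-negative KL divergence between the distribution $(j_1/j,j_2/j)$ and $(k_1/k,k_2/k)$ times $j$, but the Jensen derivation above is the shortest route.
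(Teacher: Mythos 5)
Your proof is correct and takes essentially the same route as the paper's: both are one-step applications of Jensen's inequality yielding the two-term log-sum inequality. The only cosmetic difference is that you apply Jensen to the convex function $x\ln x$ with weights $k_i/k$ and points $j_i/k_i$, whereas the paper applies it to the concave function $\ln x$ with weights $j_i/(j_1+j_2)$ and points $k_i/j_i$; the two computations are equivalent.
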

\begin{proof}
By using Jensen's inequality, since the logarithmic function $\ln(x)$ is concave, the following inequality holds:
\begin{equation}
    \ln\left(w_1 x_1 + w_2 x_2\right) \ge w_1 \ln(x_1) + w_2 \ln(x_2).
\end{equation}
Let $w_1 = \frac{j_1}{j_1 + j_2}$, $w_2 = \frac{j_2}{j_1 + j_2}$, $x_1 = \frac{k_1}{j_1}$, and $x_2 = \frac{k_2}{j_2}$, then
\begin{align*}
    \ln\left(\frac{k_1 + k_2}{j_1 + j_2}\right) \geq \frac{j_1}{j_1 + j_2} \ln\left(\frac{k_1}{j_1}\right) + \frac{j_2}{j_1 + j_2} \ln\left(\frac{k_2}{j_2}\right).
\end{align*}

By multiplying both sides by $j_1+j_2$ and applying the logarithmic property of division, we obtain
\begin{align*}
    (j_1 + j_2) \ln\left(\frac{j_1 + j_2}{k_1 + k_2}\right) \le j_1 \ln\left(\frac{j_1}{k_1}\right) + j_2 \ln\left(\frac{j_2}{k_2}\right).
\end{align*}
This is the inequality to be proven, which is equal to
\begin{align*}
    j\ln\left(\frac{j}{k}\right) \leq j_1 \ln\left(\frac{j_1}{k_1}\right) + j_2 \ln\left(\frac{j_2}{k_2}\right).
\end{align*}
\end{proof}

\begin{proposition}
\label{prop:delta-hat}
    \begin{align*}
        \hat{\delta}_g < 1 \implies \delta_g < 1.
    \end{align*}
\end{proposition}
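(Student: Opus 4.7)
The plan is to reduce the implication to the inequality $\delta_g \le \hat{\delta}_g$ and derive that inequality from Lemma~\ref{lem:loginequality}. First I would put both quantities in a common form. Using the KKT-optimal expression $\delta_g = \lambda b e / (q_g \epsilon_g)$ together with $\sum_g u_g = 1$ and the identity for the minimized objective just above \Cref{func:minimum}, a short calculation yields
\begin{equation}
\delta_g \;=\; \frac{u_g}{v_g}\, \exp\!\left[-\frac{\epsilon N (M-cn)}{be(N-N_{\mathrm{UB}})}\right] \exp\!\left[-K^{\star}\right],
\end{equation}
where $K^{\star} = \sum_{g=1}^{G} u_g \ln(u_g/v_g)$ is the total KL term of the eventually chosen partition. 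The approximation $\hat{\delta}_g$ has the identical prefactor and memory-dependent exponential; only the KL term differs, with $K^{\star}$ replaced by $K_{\mathrm{apx}} := DP(s,p) + u(s,t_G)\ln(u(s,t_G)/v(s,t_G))$. Hence it suffices to prove $K^{\star} \ge K_{\mathrm{apx}}$.

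Next, I would split $K^{\star}$ into the contribution from groups lying to the left of $s$ and those lying inside $[s, t_G)$. By optimality of the DP the first piece is at least $DP(s,p)$, since $DP(s,p)$ is defined as the maximum over all partitions into at most $p$ groups. For the second piece, I would apply Lemma~\ref{lem:loginequality} inductively along any binary refinement of $[s, t_G)$: whenever an interval $[a, c)$ is split at an interior point $b$, the identities $u(a,c) = u(a,b) + u(b,c)$ and $v(a,c) = v(a,b) + v(b,c)$ let the lemma give
\begin{equation}
u(a,b)\ln\!\frac{u(a,b)}{v(a,b)} + u(b,c)\ln\!\frac{u(b,c)}{v(b,c)} \;\ge\; u(a,c)\ln\!\frac{u(a,c)}{v(a,c)}.
\end{equation}
Chaining these inequalities across the whole sub-partition of $[s, t_G)$ shows that its KL contribution is at least $u(s,t_G)\ln(u(s,t_G)/v(s,t_G))$, so $K^{\star} \ge K_{\mathrm{apx}}$. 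Combining the two bounds yields $\exp[-K^{\star}] \le \exp[-K_{\mathrm{apx}}]$, and since the shared prefactor is positive we get $\delta_g \le \hat{\delta}_g$; hence $\hat{\delta}_g < 1$ forces $\delta_g < 1$.

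The main obstacle I anticipate is not the inequality itself but the bookkeeping in the first step: one must verify that the prefactor in $\delta_g$ and in $\hat{\delta}_g$ genuinely coincides. This hinges on the observation that $q_{\mathrm{UB}}$, $N_{\mathrm{UB}}$, and $n$ are pinned down by the outermost threshold $t_G$ alone, while $u_g$ and $v_g$ depend only on the interval $[y, s)$ currently assigned to group $g$, so none of them move between the true optimum and the approximation. Once this alignment is confirmed, the conclusion is a one-line consequence of Lemma~\ref{lem:loginequality} applied along an arbitrary refinement sequence of $[s, t_G)$.
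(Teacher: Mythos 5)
Your proof is correct and follows essentially the same route as the paper: both reduce the claim to the fact that refining the partition of $[s,t_G)$ can only increase the KL term, which is exactly the (inductive) content of Lemma~\ref{lem:loginequality}, and your write-up is actually more careful than the paper's terse proof since you verify the prefactor alignment explicitly. One small correction: the left piece contributes \emph{exactly} $DP(s,p)$ because the final partition is the one reconstructed from the DP table, not ``at least $DP(s,p)$ by optimality'' --- optimality alone would give the opposite inequality --- but the bound you need still holds.
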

\begin{proof}
    From Lemma~\ref{lem:loginequality}, the following inequality holds:
     \begin{align*}
        u(s,t_G)\ln{\frac{u(s,t_G)}{v(s,t_G)}} \le \sum_{g=p}^{G} {u_g\ln{\frac{u_g}{v_g}}}.
    \end{align*}
    Therefore, when the interval $[s,t_G)$ is divided, the KL divergence increases compared to the case when no division is made. Thus, we have:
     \begin{align*}
        \hat{\delta}_g < 1 \implies \delta_g < 1 .
    \end{align*}
\end{proof}
What this proposition shows is that if $\hat{\delta_g}<1$ holds at the current stage,then $\delta_g<1$ will hold regardless of how the remaining unexamined score space is divided. 
Therefore, the new threshold can be determined as the value of $y$ that maximizes $DP(s,p)$, when the interval $[y,s)$ is treated as a new group, and the corresponding $\hat{\delta}_g$ for this group satisfies $\hat{\delta}_g<1$.
This allows for the efficient approximation of the optimal threshold.

\section{Experimental Setup}\label{appendix:experimental setup}

\subsection{Dataset}
We use the \textbf{AOL query log dataset}, which contains 21 million search queries collected from approximately 650{,}000 users over 90 days.
\begin{itemize}
    \item \textbf{Unique terms:} 3.8 million
    \item \textbf{Examples:} ``google'', ``yahoo'', ``amazon.com''
    \item \textbf{Distribution:} Follows \textit{Zipf's law}, which is also observed in web traffic patterns~\cite{adamic2002zipf} and natural language word frequencies~\cite{piantadosi2014zipf}.
    \item \textbf{Data split:} Queries from the 5th day are used for training/parameter tuning, and queries from the 50th day are used for evaluation/counting.
\end{itemize}

\subsection{Query Distributions}
We evaluate two types of query distributions:
\begin{enumerate}
    \item \textbf{Uniform Distribution:} Each unique term is queried exactly once.
    \item \textbf{Frequency-Weighted Distribution:} Each unique term is queried according to its observed frequency in the dataset.
\end{enumerate}

\subsection{Implementation Details}
\begin{itemize}
    \item \textbf{Unique Bucket (UB):} Implemented as an open-addressing hash table without bucket expansion.
    \item \textbf{Per-element UB memory cost:} $c = 20\ \mathrm{bytes}$.
    \item \textbf{Learned model:} We adopt the model from Hsu et al.~\cite{hsu2019learning}, trained on the first 5 days of AOL query logs.
    \item \textbf{Model amortized memory usage:} $0.0152\ \mathrm{MB}$ over 90 days (following Hsu et al.~\cite{hsu2019learning}).
    \item \textbf{Hardware:} Intel(R) Core(TM) Ultra 7 155H CPU, 16 cores, 3.80 GHz.
\end{itemize}

\subsection{Comparison Methods}
We compare the following methods under the same total memory budget:
\begin{itemize}
    \item \textbf{Count-Min Sketch (CMS)}~\cite{cms_applicatin}:
    \begin{itemize}
        \item Hyperparameters: $\epsilon, \delta$.
        \item Under a fixed memory budget, parameters that minimize estimation error are non-trivial to derive.
        \item We empirically search multiple parameter settings and report the best-performing configuration.
    \end{itemize}
    \item \textbf{Learned Count-Min Sketch (LCMS)}~\cite{hsu2019learning}:
    \begin{itemize}
        \item Hyperparameter: memory usage.
    \end{itemize}
    \item \textbf{Optimized LCMS (OptLCMS)} (proposed):
    \begin{itemize}
        \item Hyperparameters: memory usage, allowable error $\epsilon$, and maximum number of partitions $G$.
        \item $\epsilon$ is set to the smallest value achievable by CMS under the same memory ($\epsilon = e / M$).
        \item $G = 10$ in all experiments.
    \end{itemize}
\end{itemize}

\end{document}